\newcommand{\Reals}{\mathbb{R}}
\newcommand{\Nats}{\mathbb{N}}
\renewcommand{\vec}[1]{\mathbf{#1}}
\newcommand{\varvec}[1]{\boldsymbol{#1}}
\NewDocumentCommand{\NN}{o d()}{%
  \mathsf{net}_{\IfNoValueTF{#1}{\varvec{\theta}}{#1}}\IfNoValueF{#2}{{\left(#2\right)}}%
}%
\NewDocumentCommand{\policy}{o d()}{%
  \pi_{\IfNoValueTF{#1}{\varvec{\theta}}{#1}}\IfNoValueF{#2}{{\left(#2\right)}}%
}%
\NewDocumentCommand{\valC}{o d()}{%
  V_{\mathcal{C}}^{\pi_{\IfNoValueTF{#1}{\varvec{\theta}}{#1}}}\IfNoValueF{#2}{{\left(#2\right)}}%
}%
\NewDocumentCommand{\safetycritic}{o d()}{%
  \widetilde{V}_{\mathcal{C}}^{\pi_{\IfNoValueTF{#1}{\varvec{\theta}}{#1}}}\IfNoValueF{#2}{{\left(#2\right)}}%
}%
\NewDocumentCommand{\prob}{o d()}{%
  \mathbb{P}_{\IfNoValueF{#1}{#1}}\IfNoValueF{#2}{{\left[#2\right]}}
}
\begin{document}
\title{%
Counterexample-Guided Repair of Reinforcement Learning Systems Using Safety Critics
}
\titlerunning{Counterexample-Guided Repair using Safety Critics}
%
\author{David Boetius\orcidID{0000-0002-9071-1695} \and
Stefan Leue\orcidID{0000-0002-4259-624X}}
\authorrunning{D. Boetius \and S. Leue}
%
\institute{University of Konstanz, 78457 Konstanz, Germany
\email{\{david.boetius,stefan.leue\}@uni-konstanz.de}}
\maketitle              
\begin{abstract}

Na{\"i}vely trained Deep Reinforcement Learning agents may fail to satisfy
vital safety constraints.
To avoid costly retraining, we may desire to repair a previously trained
reinforcement learning agent to obviate unsafe behaviour.
We devise a counterexample-guided repair algorithm for repairing reinforcement learning systems leveraging safety critics.
The algorithm jointly repairs a reinforcement learning agent and a safety critic using gradient-based constrained optimisation.

\keywords{Reinforcement Learning \and Safety \and Repair.}
\end{abstract}
\section{Introduction}
Deep Reinforcement Learning is at the core of several recent breakthroughs in AI~\cite{%
  FawziBalogHuangEtAl2022,MankowitzMichiZhernovEtAl2023,openai22%
}.
With the increasing abilities of reinforcement learning agents, it becomes vital to effectively constrain such agents to avoid harm, particularly in safety-critical applications.
A particularly effective class of constraints are formal guarantees on the non-occurrence of undesired behaviour (safety).
Such guarantees are obtainable through formal verification~\cite{LandersDoryab2023}.

Counterexample-guided repair is a successful iterative refinement algorithm for obtaining formally verified deep neural networks in supervised learning~\cite{BoetiusLeueSutter2023}.
The repair algorithm alternates searching counterexamples and modifying the model under repair to remove counterexamples.
A central component of the algorithm is a function that quantifies the safety of a neural network output.
In reinforcement learning, the safety of an output depends on the interaction with the environment.
Therefore, quantifying the safety of an output is expensive, making it challenging to apply counterexample-guided repair to reinforcement learning.
To overcome this challenge, we propose to learn a safety critic~\cite{BharadhwajKumarRhinehartEtAl2021,HansSchneegassSchaeferEtAl2008,SrinivasanEysenbachHaEtAl2020} to quantify safety. 
Since safety critics are themselves imperfect machine learning models, we propose to repair the safety critic alongside the actual reinforcement learning agent.

In the spirit of actor-critic reinforcement learning algorithms~\cite{SuttonBarto2018}, a safety critic learns to predict the safety of a state from gathered experience~\cite{BharadhwajKumarRhinehartEtAl2021}.
The idea of safety critics is analogous to the widely used~\cite{MnihBadiaMirzaEtAl2016,SchulmanWolskiDhariwalEtAl2017} concept of value critics that learn to predict the value of a state from experience.
We can use recorded unsafe trajectories, coupled with safe trajectories, as a starting point for learning a safety critic.
Since our aim is to \emph{repair} a reinforcement learning agent, we can assume that the agent was already found to behave unsafely and, therefore, unsafe trajectories are available.

When using a safety critic in counterexample-guided repair, it is vital that the safety critic correctly recognises new counterexamples as unsafe. 
Otherwise, counterexample-guided repair can not proceed to repair the reinforcement learning agent. 
To achieve this, we propose to iteratively repair the safety critic alongside the reinforcement learning agent, such that the safety critic correctly recognises newly found counterexamples.


Our approach allows for great generality regarding the environments in which an agent operates.
Similarly, our approach does not restrict the class of safety specifications that can be repaired.
It solely requires the ability to falsify or verify the specification given an agent and an environment.
Even in the absence of a falsifier or verifier, our approach can be used to repair a reinforcement learning agent whenever unsafe behaviour is detected.

The following section reviews the literature relevant to this paper.
Following this, we formally introduce safe reinforcement learning, safety critics, and counterexample-guided repair.
The subsequent section describes our approach of counterexample-guided repair using safety critics in more detail and rigour. 
We conclude with an outlook on future work. 
This includes an experimental evaluation.

\section{Related Work}
Reinforcement learning is the process by which an agent learns to solve a task by repeatedly interacting with an environment.
The agent leans by maximising the return it receives during the interaction.
State-of-the-art algorithms for reinforcement learning include Advantage Actor-Critic (A2C)~\cite{MnihBadiaMirzaEtAl2016}, Asynchronous Advantage Actor-Critic (A3C)~\cite{MnihBadiaMirzaEtAl2016}, and Proximal Policy Optimisation (PPO)~\cite{SchulmanWolskiDhariwalEtAl2017}.
These algorithms are based on deep neural networks as reinforcement learning agents.
Due to the prevalence of deep neural networks in state-of-the-art reinforcement learning methods, this paper is primarily concerned with deep reinforcement learning. 
However, our approach is not limited to this class of models.

In safe reinforcement learning, the agent must also respect additional safety constraints.
An overview of safe reinforcement learning is presented in~\cite{GarciaFernandez2015}.
More recent developments include shielding~\cite{AlshiekhBloemEhlersEtAl2018}, safe reinforcement learning using abstract interpretation~\cite{JinTianZhiEtAl2022,SunShoukry2021}, and safe reinforcement learning via safety critics~\cite{BharadhwajKumarRhinehartEtAl2021,HansSchneegassSchaeferEtAl2008,SrinivasanEysenbachHaEtAl2020,YangSimaoTindemansEtAl2023}.
In contrast to safe reinforcement learning, the repair of reinforcement learning agents is concerned with making an existing reinforcement agent safe.
In this paper, we apply safety critics to repairing reinforcement learning agents.

Verification of reinforcement learning systems is concerned with proving that a reinforcement learning agent behaves safely. 
Recent approaches for reinforcement learning verification build upon reachability analysis~\cite{BacciGiacobbeParker2021,IvanovWeimerAlurEtAl2019,TranYangLopezEtAl2020} and model checking~\cite{AmirSchapiraKatz2021,BacciGiacobbeParker2021,EliyahuKazakKatzEtAl2021}.
A survey of the field is provided by~\cite{LandersDoryab2023}.

In the domain of supervised learning, machine learning models are verified using Satisfiability Modulo Theories (SMT) solving~\cite{Ehlers2017,GuidottiLeofanteTacchellaEtAl2019,KatzBarrettDillEtAl2017}, Mixed Integer Linear Programming (MILP)~\cite{TjengXiaoTedrake2019}, and Branch and Bound (BaB)~\cite{BunelLuTurkaslanEtAl2020,FerrariMuellerJovanovicEtAl2022,WangZhangXuEtAl2021,ZhangWangXuEtAl2022}.
Many repair algorithms for supervised machine learning models are based on counterexample-guided repair~\cite{BoetiusLeueSutter2023}.
The approaches for removing counterexamples range from augmenting the training set~\cite{PulinaTacchella2010,TanZhuGuo2021} and constrained optimisation~\cite{BauerMarquartBoetiusLeueEtAl2021,GuidottiLeofantePulinaEtAl2019} to specialised neural network architectures~\cite{DongSunWangEtAl2020,GuidottiLeofantePulinaEtAl2019}.
Non-iterative repair approaches for neural networks include~\cite{SotoudehThakur2021b,TaoNawasMitchellEtAl2023}.

\section{Preliminaries}
This section introduces Safe Reinforcement Learning, Safety Critics, Verification, Falsification, and Repair.
While our algorithm's envisioned target is deep reinforcement learning agents (neural networks), our approach is not specific to a particular model class.

\subsection{Safe Reinforcement Learning}
Following~\cite{BharadhwajKumarRhinehartEtAl2021}, we adopt a perspective on safe reinforcement learning where unsafe behaviour may occur during training but not when an agent is deployed.
Our assumption is that we can train the agent to behave safely in a simulation where unsafe behaviour is inconsequential.
A safe reinforcement learning task is formalised as a Constrained Markov Decision Process (CMDP).
We consider CMDPs with deterministic transitions.

\begin{definition}[CMDP]
  A \emph{Constrained Markov Decision Process (CMDP) with deterministic transitions} is a tuple~\((\mathcal{S}, \mathcal{A}, P, R, \mathcal{S}_0, \mathcal{C})\), where~\(\mathcal{S}\) is the \emph{state space},~\(\mathcal{A}\) is the set of \emph{actions},~\(P: \mathcal{S} \times \mathcal{A} \to \mathcal{S}\) is the \emph{transition function},~\(R: \mathcal{S} \times \mathcal{A} \to \Reals\) is the \emph{reward},~\(\mathcal{C} \subset \mathcal{S}\) is a set of \emph{safe states}, and~\(\mathcal{S}_0 \subset \mathcal{C}\) is a set of \emph{initial states}.

  For a finite time-horizon~\(T \in \Nats\),~\(\tau = s_0, a_0, s_1, a_1, \ldots, s_T\)
  is a \emph{trajectory} of a CMDP if~\(s_0 \in \mathcal{S}_0\) and~\(s_t = P(s_{t-1}, a_{t-1})\),~\(\forall t \in \{1, \ldots, T\}\).
\end{definition}

\begin{definition}[Return]
  Given a discount factor~\(\gamma \in [0, 1]\), the return~\(G\) of a trajectory~\(\tau = s_0, a_0, s_1, a_1, \ldots, s_T\) is
  \begin{equation*}
    G(\tau) = \sum_{t=0}^{T-1} \gamma^t R(s_t, a_t).
  \end{equation*}
\end{definition}

\begin{definition}[Safe Trajectories]
  For a set of safe states~\(\mathcal{C}\), a trajectory~\(\tau = s_0, a_0, s_1, a_1, \ldots, s_T\) is \emph{safe} if~\(s_0, s_1, \ldots, s_T \in \mathcal{C}\).
  We write~\(\tau \vDash \mathcal{C}\) if~\(\tau\) is safe.
\end{definition}
Assuming a uniform distribution~\(\mathcal{U}(\mathcal{S}_0)\) over the initial states, our goal is to learn a (deterministic) parametric policy~\(\pi_{\varvec{\theta}}: \mathcal{S} \to \mathcal{A}\) that maximises the expected return while maintaining safety
\begin{equation}
  \arraycolsep=4pt
  \begin{array}{cl}
    \underset{\varvec{\theta}}{\text{maximise}} & \mathbb{E}_{s_0 \sim \mathcal{U}(\mathcal{S}_0)}[G(\tau(s_0, \policy))] \\
    \text{subject to} & \tau(s_0, \policy) \vDash \mathcal{C} \quad \forall s_0 \in \mathcal{S}_0,
  \end{array}%
  \label{eqn:safe-rl}
\end{equation}
where~\(\tau(s_0, \policy) = s_0, a_0, s_1, a_1, \ldots, s_T\) is a trajectory with~\(a_t = \policy(s_t)\),~\(\forall t \in \{0, \ldots, T-1\}\).

A parametric policy may be given, for example, by a neural network~\(\NN: \Reals^n \to \mathcal{A}\) reading a numeric representation of a state~\(\vec{x}_s \in \Reals^n\),~\(n \in \Nats\),~\(s \in \mathcal{S}\) and returning an action~\(a \in \mathcal{A}\).
In this paper, we use the terms policy and agent interchangeably.

\subsection{Safety Critics}
Safety critics learn the safety value function~\(\valC: \mathcal{S} \to \Reals\)
\begin{equation}
  \valC(s) = \min_{t \in \{0, \ldots, T\}} c(s_t),\label{eqn:vc}
\end{equation}
where \(s_0 = s\),~\(s_t = P(s_{t-1}, \policy(s_{t-1}))\),~\(\forall t \in \{1, \ldots T\}\), and~\(c: \mathcal{S} \to \Reals\) is a satisfaction function~\cite{BauerMarquartBoetiusLeueEtAl2021} or robustness measure~\cite{DonzeMaler2010}
for the safe set~\(\mathcal{C}\).
\begin{definition}[Satisfaction Function]
  A function~\(c: \mathcal{S} \to \Reals\) is a \emph{satisfaction function} of a set~\(\mathcal{C} \subseteq \mathcal{S}\) if~\(\forall s \in \mathcal{S}: c(s) \geq 0 \Leftrightarrow s \in \mathcal{C}\).
\end{definition}
The concept of a safety critic is analogous to (value) critics in actor-critic reinforcement learning~\cite{SuttonBarto2018}.
Classical (value) critics learn the \emph{value} of a state~\(V^{\policy}\).
The value is the expected return when starting in a state.
Safety critics can be learned using the methods from~\cite{BharadhwajKumarRhinehartEtAl2021,SrinivasanEysenbachHaEtAl2020,YangSimaoTindemansEtAl2023}.

\subsection{Verification, Falsification, and Repair}
Given a CMDP~\(\mathcal{M} = (\mathcal{S}, \mathcal{A}, P, R, \mathcal{S}_0, \mathcal{C})\) and a policy~\(\policy\), we are interested in the question whether the policy guarantees safety for all initial states.
A \emph{counterexample} is an initial state for which following the policy leads to unsafe states.
\begin{definition}[Counterexample]
  Given a policy~\(\policy\), a \emph{counterexample} is an initial state~\(s_0 \in \mathcal{S}_0\), such that the trajectory~\(\tau = s_0, a_0, s_1, a_1, \ldots, s_T\),~\(T \in \Nats\) with~\(a_t = \policy(s_{t-1})\),~\(\forall t \in \{1, \ldots, T-1\}\) contains an unsafe state~\(s_t \notin \mathcal{C}\) for some~\(t \in \{1, \ldots, T\}\).
\end{definition}
Since counterexamples lead to unsafe states, the safety value function~\(\valC\) of a counterexample is negative.

When considering algorithms for searching counterexamples, we differentiate between \emph{falsifiers} and \emph{verifiers}. 
While falsifiers are \emph{sound} counterexample-search algorithms, verifiers are \emph{sound} and \emph{complete}.
\begin{definition}[Soundness and Completeness]
  A counterexample-search algorithm is \emph{sound} if it only produces genuine counterexamples.
  Additionally, an algorithm is \emph{complete} if it terminates and produces a counterexample for every unsafe policy.
\end{definition}
\begin{proposition}\label{prop:verify}
  A policy~\(\policy\) is safe whenever a verifier does not produce a counterexample for~\(\policy\).
\end{proposition}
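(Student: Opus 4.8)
The plan is to prove the contrapositive: if $\policy$ is not safe, then a verifier produces a counterexample for $\policy$, so that a verifier failing to produce a counterexample certifies safety.

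First I would unwind what it means for $\policy$ to be unsafe. By the safety constraint in~\eqref{eqn:safe-rl}, $\policy$ is safe iff $\tau(s_0, \policy) \vDash \mathcal{C}$ for every $s_0 \in \mathcal{S}_0$; hence $\policy$ is unsafe iff there is some $s_0 \in \mathcal{S}_0$ whose induced trajectory $\tau(s_0, \policy) = s_0, a_0, \ldots, s_T$ visits a state $s_t \notin \mathcal{C}$. Because $\mathcal{S}_0 \subset \mathcal{C}$ we have $s_0 \in \mathcal{C}$, so the first violating index satisfies $t \in \{1, \ldots, T\}$, which is exactly the condition in the definition of a counterexample. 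Consequently, ``$\policy$ is unsafe'' and ``a counterexample for $\policy$ exists'' are equivalent.

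The second step is to apply completeness. By the definition of soundness and completeness, a verifier is complete, meaning it terminates and produces a counterexample for every unsafe policy. Together with the equivalence from the first step, if $\policy$ is unsafe then the verifier outputs a counterexample. Taking the contrapositive: if a verifier does not produce a counterexample for $\policy$, then $\policy$ is not unsafe, i.e., $\policy$ is safe.

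I do not expect a real obstacle; the statement is essentially the definitional mirror of completeness. The only point that needs a little care is the bookkeeping in the first step --- checking that the index of the first unsafe state is at least $1$, which is where $\mathcal{S}_0 \subset \mathcal{C}$ enters, so that the witness matches the counterexample definition rather than merely failing $\tau \vDash \mathcal{C}$ at $t = 0$. It is also worth noting that soundness is not used in this direction; it would only be needed for the converse implication, namely that a produced counterexample actually witnesses unsafety.
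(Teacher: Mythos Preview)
Your proposal is correct and follows exactly the paper's approach: the paper's proof is the single sentence ``\cref{prop:verify} follows from contraposition on the completeness of verifiers,'' and your argument is a careful expansion of precisely that contraposition. Your extra bookkeeping (using $\mathcal{S}_0 \subset \mathcal{C}$ to align ``unsafe'' with ``counterexample exists'') and your remark that soundness is not needed here are both accurate and go beyond what the paper spells out.
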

\begin{proof}
  \cref{prop:verify} follows from contraposition on the completeness of verifiers.
\end{proof}

Given an unsafe policy~\(\policy\), the task of \emph{repair} is to modify the policy to be safe while maintaining high returns.
A successful repair algorithm for supervised learning is counterexample-guided repair~\cite{BauerMarquartBoetiusLeueEtAl2021,BoetiusLeueSutter2023,TanZhuGuo2021}.
The following section introduces a counterexample-guided repair algorithm for reinforcement learning.

\section{Counterexample-Guided Repair using Safety Critics}
Existing counterexample-guided repair algorithms repair supervised learning models by alternating counterexample search and counterexample removal.
\cref{algo:cgr} describes the algorithmic skeleton of counterexample-guided repair.
This skeleton is akin to all counterexample-guided repair algorithms.

\begin{algorithm}[tbhp]
  \KwIn{CMDP~\(\mathcal{M} = (\mathcal{S}, \mathcal{A}, P, R, \mathcal{S}_0, \mathcal{C})\), Policy~\(\policy\)}
  \(\mathcal{S}^c \gets \mathrm{find\ counterexamples}(\mathcal{M}, \policy)\)\;
  \Do{\(\exists s_0 \in \mathcal{S}^c: s_0 \;\mathrm{is\ counterexample}\)}{%
    \(\varvec{\theta} \gets \mathrm{remove\ counterexamples}(\mathcal{S}^c, \policy, \mathcal{M})\)\;
    \(\mathcal{S}^c \gets \mathcal{S}^c \cup \mathrm{find\ counterexamples}(\mathcal{M}, \policy)\)\;
  }
  \caption{Counterexample-Guided Repair}\label{algo:cgr}
\end{algorithm}

When using a verifier to find counterexamples, \cref{algo:cgr} is guaranteed to produce a safe policy if it terminates~\cite{BoetiusLeueSutter2023}.
However, \cref{algo:cgr} is not generally guaranteed to terminate~\cite{BoetiusLeueSutter2023}. 
Despite this, counterexample-guided repair has proven successful in repairing deep neural networks~\cite{BauerMarquartBoetiusLeueEtAl2021} and other machine learning models~\cite{TanZhuGuo2021}.

\cref{algo:cgr} has two sub-procedures we need to instantiate for obtaining an executable algorithm: finding counterexamples and removing counterexamples.
For finding counterexamples, we can use tools for verifying reinforcement learning systems~\cite{%
  AmirSchapiraKatz2021,%
  BacciGiacobbeParker2021,%
  EliyahuKazakKatzEtAl2021,%
  IvanovWeimerAlurEtAl2019,%
  TranYangLopezEtAl2020%
} (see~\cite{LandersDoryab2023} for a survey).
In the remainder of this paper, we address removing counterexamples using safety critics.

\subsection{Removing Counterexamples}
Similarly to the supervised setting~\cite{BoetiusLeueSutter2023}, removing counterexamples corresponds to solving a constrained optimisation problem
\begin{equation}
  \arraycolsep=4pt
  \begin{array}{cl}
    \underset{\varvec{\theta}}{\text{maximise}} & \mathbb{E}_{s_0 \sim \mathcal{U}(\mathcal{S}_0)}[G(\tau(s_0, \policy))] \\
    \text{subject to} & \tau(s_0, \policy) \vDash \mathcal{C} \quad \forall s_0 \in \mathcal{S}^c,
  \end{array}%
  \label{eqn:remove-cx}
\end{equation}
where~\(\tau(s_0, \policy)\) is as in \cref{eqn:safe-rl} and~\(\mathcal{S}^c\) is a finite set of counterexamples.
In the supervised setting, we can remove counterexamples by directly solving the analogue of \cref{eqn:remove-cx} using gradient-based optimisation algorithms~\cite{BauerMarquartBoetiusLeueEtAl2021}.
However, for repairing reinforcement learning policies, checking whether a set of parameters~\(\varvec{\theta}\) is feasible for \cref{eqn:remove-cx} is expensive, as it requires simulating the CMDP.\@
Additionally, the term~\(\tau(s_0, \policy) \vDash \mathcal{C}\) suffers from exploding gradients~\cite{PhilippSongCarbonell2017} due to the repeated application of~\(\policy\) for obtaining the trajectory.
These properties of \cref{eqn:remove-cx} hinder the application of gradient-based optimisation algorithms for removing counterexamples by solving \cref{eqn:remove-cx} directly.

To obtain an algorithm for removing counterexamples, we first note that \cref{eqn:remove-cx} can equivalently be reformulated using the safety value function~\(\valC\) from \cref{eqn:vc}.
Concretely, we can replace the constraint~\(\tau(s_0, \policy) \vDash \mathcal{C}\) by~\(\valC(s_0) > 0\).
Now, when approximating~\(\valC\) using a safety critic~\(\safetycritic\), we obtain
\begin{equation}
  \arraycolsep=4pt
  \begin{array}{cl}
    \underset{\varvec{\theta}}{\text{maximise}} & \mathbb{E}_{s_0 \sim \mathcal{U}(\mathcal{S}_0)}[G(\tau(s_0, \policy))] \\
    \text{subject to} & \safetycritic(s_0) \geq 0 \quad \forall s_0 \in \mathcal{S}^c,
  \end{array}%
  \label{eqn:remove-cx-vc}
\end{equation}
While \cref{eqn:remove-cx-vc} is not equivalent to \cref{eqn:remove-cx} due to using an approximation of~\(\valC\), \cref{eqn:remove-cx-vc} can be solved using techniques such as stochastic gradient descent/ascent~\cite{EbanSchainMackeyEtAl2017} or the~\(\ell_1\) penalty function method~\cite{BauerMarquartBoetiusLeueEtAl2021,NocedalWright2006}.
To ensure that solving \cref{eqn:remove-cx-vc} actually removes counterexamples, we repair the safety critic~\(\safetycritic\) alongside the policy.

\subsection{Repairing Safety Critics}
To allow us to remove counterexamples by solving \cref{eqn:remove-cx-vc}, the safety critic~\(\safetycritic\) needs to correctly recognise the counterexamples in~\(\mathcal{S}^c\) as counterexamples.
By recognising a counterexample~\(s_0\) as a counterexample, we mean that~\(\safetycritic(s_0) < 0\).
We can ensure that the safety critic recognises all counterexamples in~\(\mathcal{S}^c\) by solving
\begin{equation}
  \arraycolsep=4pt
  \begin{array}{cl}
    \underset{\varvec{\theta}}{\text{minimise}} & J(\safetycritic) \\
    \text{subject to} & \safetycritic(s_0) < 0 \quad \forall s_0 \in \mathcal{S}^c \text{ with } \valC(s_0) < 0,
  \end{array}%
  \label{eqn:repair-safety-critic}
\end{equation}
where~\(J(\safetycritic)\) is a loss function for training the safety critic~\cite{BharadhwajKumarRhinehartEtAl2021,SrinivasanEysenbachHaEtAl2020,YangSimaoTindemansEtAl2023}.
Solving \cref{eqn:repair-safety-critic} can itself be understood as removing counterexamples \emph{of the safety critic}.
As \cref{eqn:remove-cx-vc}, we can solve \cref{eqn:repair-safety-critic} using stochastic gradient descent/ascent~\cite{EbanSchainMackeyEtAl2017} or the~\(\ell_1\) penalty function method~\cite{BauerMarquartBoetiusLeueEtAl2021,NocedalWright2006}.

\subsection{Counterexample Removal Algorithm}
\begin{algorithm}[tbp]
  \KwIn{CMDP~\(\mathcal{M} = (\mathcal{S}, \mathcal{A}, P, R, \mathcal{S}_0, \mathcal{C})\), Unsafe policy~\(\policy\), Safety Critic~\(\safetycritic\), Counterexamples~\(\mathcal{S}^c\)}
  
  \While{\(\exists s_0 \in \mathcal{S}^c: s_0 \;\mathrm{is\ counterexample}\)}{%
    update~\(\safetycritic\) by solving \cref{eqn:repair-safety-critic}\;
    update~\(\policy\) by solving \cref{eqn:remove-cx-vc} using~\(\safetycritic\)\;
  }
  \caption{Counterexample Removal}\label{algo:remove-cx}
\end{algorithm}

We propose jointly modifying the safety critic and the unsafe reinforcement learning agent~\(\policy\).
\Cref{algo:remove-cx} summarises our approach.
We first update the safety critic to recognise the available counterexamples.
This corresponds to solving \cref{eqn:repair-safety-critic}.
Using the updated safety critic, we update~\(\policy\) to remove the counterexamples by solving \cref{eqn:remove-cx-vc}.

Since the safety critic may fail to recognise a counterexample as a counterexample for the updated policy, we iterate the previous two steps until all counterexamples are removed.

In principle, this procedure may fail to terminate if the safety-critic \enquote{forgets} to recognise the counterexamples of the initial policy when being updated in the second iteration of \cref{algo:remove-cx}. 
Since the policy is updated in the first iteration of \cref{algo:remove-cx}, updating~\(\safetycritic\) in the second iteration does not consider counterexamples for the initial policy.
Therefore, the policy may revert to its initial parameters in the second iteration to evade the updated safety critic. 
This leads to an infinite loop.
However, this issue can be circumvented by including previous unsafe trajectories in \cref{eqn:repair-safety-critic}, similarly to how counterexamples are retained in \cref{algo:cgr} for later iterations to counter reintroducing counterexamples.

\section{Conclusion}
We introduce a counterexample-guided repair algorithm for reinforcement learning systems.
We leverage safety critics to circumvent costly simulations during counterexample removal.
Our approach applies to a wide range of specifications and can work with any verifier and falsifier.
The central idea of our approach is to repair the policy and the safety critic jointly. 

Future work includes evaluating our algorithm experimentally and comparing it with abstract-interpretation-based safe reinforcement learning~\cite{JinTianZhiEtAl2022,SunShoukry2021}.
Since counterexample-guided repair avoids the abstraction error of abstract interpretation, we expect that counterexample-guided repair can produce less conservative, safe reinforcement learning agents.
Additionally, our ideas are not inherently limited to reinforcement learning but can be applied whenever satisfaction functions are unavailable or costly to compute.
Exploring such applications is another direction for future research.

\begin{credits}

\subsubsection{\discintname}
The authors have no competing interests to declare that are relevant to the content of this article.
\end{credits}
%
%
%
\bibliographystyle{splncs04}
\bibliography{references}
\end{document}